\documentclass[letterpaper]{article} 
\usepackage{aaai2026}  
\usepackage{times}  
\usepackage{helvet}  
\usepackage{courier}  
\usepackage[hyphens]{url}  
\usepackage{graphicx} 
\urlstyle{rm} 
\usepackage{natbib}  
\usepackage{caption} 
\frenchspacing  
\setlength{\pdfpagewidth}{8.5in}  
\setlength{\pdfpageheight}{11in}  
%
\usepackage{algorithm}
\usepackage{algorithmic}
\usepackage{amsthm}
\newtheorem{theorem}{Theorem}
\usepackage{algorithm}
\usepackage{algorithmic}
\usepackage{amsmath} 
\usepackage{booktabs}
\usepackage{multirow}
\usepackage{amsthm} 
\usepackage{appendix}

\usepackage{bm}        
\usepackage{amssymb}   
\usepackage{subcaption}
%
\usepackage{newfloat}
\usepackage{listings}
\DeclareCaptionStyle{ruled}{labelfont=normalfont,labelsep=colon,strut=off} 
\lstset{%
	basicstyle={\footnotesize\ttfamily},
	numbers=left,numberstyle=\footnotesize,xleftmargin=2em,
	aboveskip=0pt,belowskip=0pt,%
	showstringspaces=false,tabsize=2,breaklines=true}
\floatstyle{ruled}
\newfloat{listing}{tb}{lst}{}
\floatname{listing}{Listing}
%
\pdfinfo{
/TemplateVersion (2026.1)
}

\setcounter{secnumdepth}{0} 

%


\title{Certified L2-Norm Robustness of 3D Point Cloud Recognition in the Frequency Domain}
\author{
    Liang Zhou\textsuperscript{\rm 1},
    Qiming Wang\textsuperscript{\rm 1}\thanks{Corresponding author.},
    Tianze Chen\textsuperscript{\rm 1}
}
\affiliations{
    \textsuperscript{\rm 1}School of Mathematics, Hohai University, China\\
    zeallll1127@gmail.com, wqm@hhu.edu.cn, 241319020003@hhu.edu.cn
}

\usepackage{bibentry}

\begin{document}

\maketitle

\begin{abstract}
3D point cloud classification is a fundamental task in safety-critical applications such as autonomous driving, robotics, and augmented reality. However, recent studies reveal that point cloud classifiers are vulnerable to structured adversarial perturbations and geometric corruptions, posing risks to their deployment in safety-critical scenarios. Existing certified defenses limit point-wise perturbations but overlook subtle geometric distortions that preserve individual points yet alter the overall structure, potentially leading to misclassification. In this work, we propose \textbf{FreqCert}, a novel certification framework that departs from conventional spatial domain defenses by shifting robustness analysis to the frequency domain, enabling structured certification against global $\ell_2$-bounded perturbations. FreqCert first transforms the input point cloud via the graph Fourier transform (GFT), then applies structured frequency-aware subsampling to generate multiple sub-point clouds. Each sub-cloud is independently classified by a standard model, and the final prediction is obtained through majority voting, where sub-clouds are constructed based on spectral similarity rather than spatial proximity, making the partitioning more stable under $\ell_2$ perturbations and better aligned with the object’s intrinsic structure. We derive a closed-form lower bound on the certified $\ell_2$ robustness radius and prove its tightness under minimal and interpretable assumptions, establishing a theoretical foundation for frequency domain certification. Extensive experiments on the ModelNet40 and ScanObjectNN datasets demonstrate that FreqCert consistently achieves higher certified accuracy and empirical accuracy under strong perturbations. Our results suggest that spectral representations provide an effective pathway toward certifiable robustness in 3D point cloud recognition.
\end{abstract}


\section{Introduction}

3D point clouds, which represent unordered sets of spatial coordinates, are widely adopted in safety-critical applications such as autonomous driving, robotics, and industrial inspection \cite{9943630, 9901642, 10425012, 10312684}. Due to their sparse, irregular, and non-Euclidean structure, point clouds pose unique challenges for perception tasks, prompting the development of dedicated deep learning architectures\cite{8099499,10.5555/3295222.3295263,li_pointcnn_2018,wang_dynamic_2019,9010002,zhao2021point}.

While these models achieve high classification accuracy under clean conditions, recent studies have revealed their alarming vulnerability to adversarial attacks—small, imperceptible perturbations in the point cloud can easily cause models to make erroneous predictions \cite{xiang_generating_2019,liu2019extending}. These adversarial manipulations, in addition to natural corruptions such as occlusions and noise, pose serious security risks due to their subtlety and targeted attack nature \cite{tsai2020robust,wen2020geometry}. 

To address this, a variety of empirical defense strategies have been proposed. Typical methods involve input transformation or purification strategies, such as DUP-Net which combines outlier removal and upsampling to defend against adversarial perturbations \cite{zhou_dup-net_2019}, IF-Defense which utilizes implicit surfaces for input projection \cite{wu_if-defense_2021}, and diffusion-based purification such as PointDP \cite{sun_critical_2023}. Despite improving empirical robustness, these methods lack certified guarantees, leaving them susceptible to adaptive attacks \cite{perez20223deformrs,lorenz2021robustness,li2022robust}. Recent test-time adaptation methods, such as Purified Self-Training (PST) \cite{lin_improving_2024}, attempt to improve robustness during inference, but still fail to provide provable robustness certificates in most practical scenarios.

\begin{figure*}[t]
\centering
\includegraphics[width=0.85\textwidth]{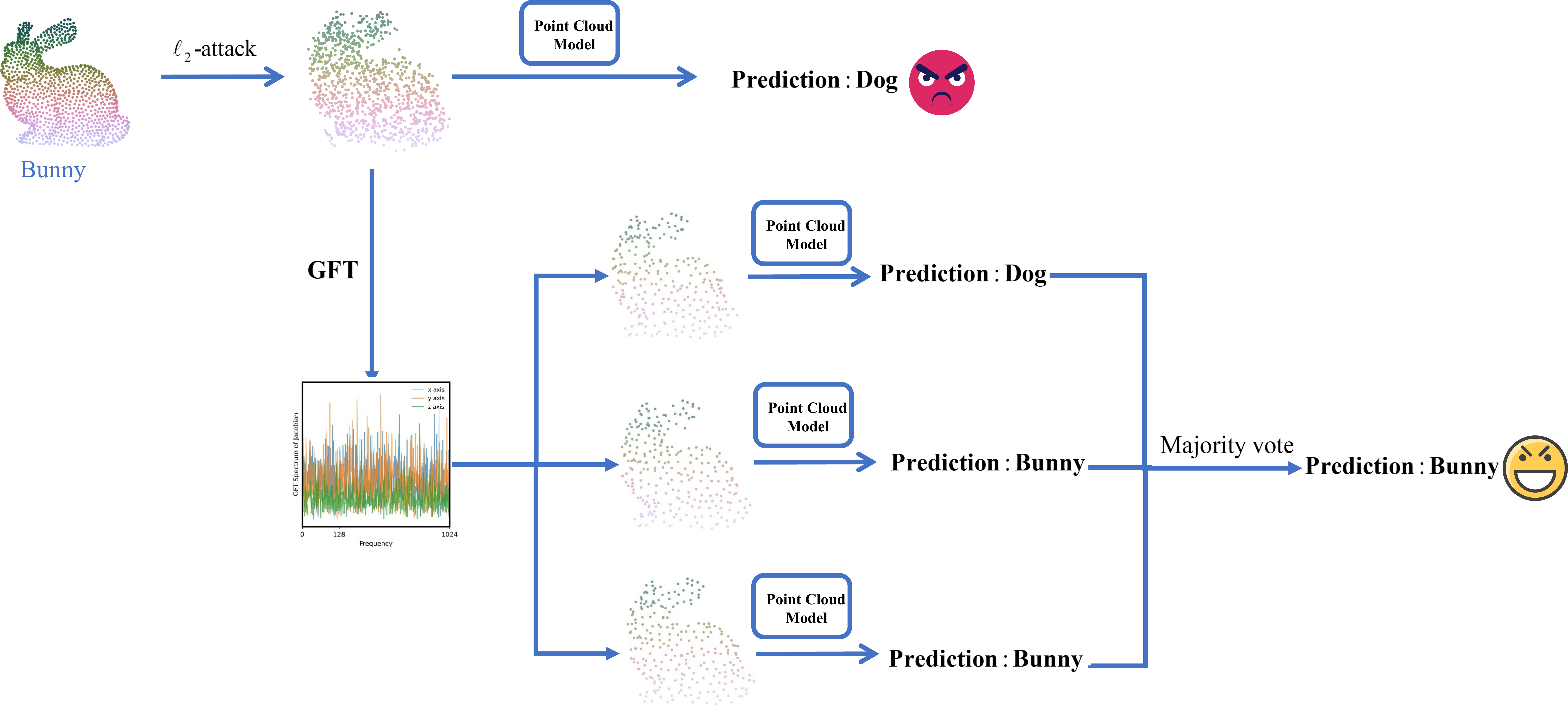}
\caption{Illustration of our framework. The input point cloud is processed into spectral slices via graph Fourier transform (GFT), each of which is independently classified. The final prediction is obtained by majority voting.}
\label{fig:framework}
\end{figure*}

To bridge this gap, certified defenses have been proposed to provide formal robustness guarantees.Certified defenses aim to provide provable guarantees that a model’s prediction remains stable under bounded perturbations. In the 3D point cloud domain, voting-based methods such as PointGuard~\cite{9577754} and PointCert~\cite{zhang_pointcert_2023} construct multiple sub-clouds via random sampling or hashing, and aggregate their predictions through majority vote to tolerate a limited number of corrupted points. While effective against point additions or deletions, these methods cannot certify robustness under $\ell_2$-bounded perturbations, which preserve the number of points but subtly shift their positions. For example, a typical $\ell_2$ attack perturbs point coordinates along adversarial gradients, altering the geometry without changing the point count~\cite{xiang_generating_2019}. In contrast, randomized smoothing\cite{pmlr-v97-cohen19c} widely effective in 2D image classification—relies on fixed input dimensionality (pixel count), which does not apply to point clouds: even slight coordinate perturbations can disrupt neighborhood relations or trigger preprocessing changes (e.g. sampling or outlier removal), invalidating the theoretical guarantees. Consequently, smoothing‐based approaches fail to reliably certify robustness in point cloud settings.

Spatial-domain point clouds are highly redundant, allowing $\ell_2$ perturbations to distort geometry imperceptibly while preserving point count—challenging existing spatial defenses.Therefore, in this paper, we analyze point cloud robustness in the frequency domain. Prior study\cite{miao_improving_2024} shows that low-frequency components capture semantic structure, while high-frequency components are more vulnerable to adversarial perturbations. We leverage this by applying graph Fourier transform (GFT)\cite{GFT} to project point clouds onto a spectral basis, then we design a frequency-guided sampling strategy called \textit{dense-overlapping spectral window} (d-OSW). Specifically, we first compute the GFT coefficients of each point to identify its dominant spectral response. Based on these responses, we assign points to overlapping frequency bands, ensuring that each slice captures a different portion of the spectrum while retaining sufficient spatial coverage. Each frequency-aware slice is then independently classified using a shared point cloud model. Finally, the overall prediction is determined by majority voting across slices, which improves robustness by reducing the impact of any single corrupted subset.

Theoretically, we establish certified $\ell_2$ robustness guarantees for the final prediction based on the stability of individual slices. Specifically, we derive closed-form expressions for two robustness radii against additive, $\ell_2$-bounded perturbations. The first, a conservative bound, ensures that the prediction remains unchanged as long as all slices are stable. The second, tighter bound leverages the majority voting mechanism: it certifies robustness even when a subset of slices is affected, as long as more than half retain the correct prediction. This analysis connects the spectral structure of the input to provable robustness at the classification level, and represents, to the best of our knowledge, the first certification framework for point clouds under continuous $\ell_2$ perturbations.

We conduct extensive experiments on the ModelNet40 and ScanObjectNN datasets to validate the effectiveness of our method under $\ell_2$ perturbations. Results show that \textbf{FreqCert} significantly improves certified robustness across various backbone networks, especially under strong attacks, demonstrating the effectiveness of our method for robust point cloud recognition.

In summary, our contributions are threefold: (1) We develop a frequency-guided sub-sampling strategy for point clouds based on graph Fourier transform and dense-overlapping spectral windows, and integrate it into \textbf{FreqCert}, a novel certification framework that classifies each spectral slice independently and aggregates predictions via majority voting; (2) We derive closed-form certified $\ell_2$ robustness bounds that explicitly account for the voting mechanism; (3) We conduct extensive experiments on ModelNet40 and ScanObjectNN, demonstrating significant gains in certified accuracy and empirical accuracy under $\ell_2$ perturbations.

\section{Related Work}
\paragraph{Deep Learning on 3D Point Clouds.}
Early works on 3D point cloud learning adapted 2D CNNs via voxelization~\cite{7353481, 7410471, 8374608}, but suffered from sparsity and high computational cost. OctNet~\cite{8100184} and sparse convolutions~\cite{8953494} alleviated these issues through efficient 3D data structures.A major breakthrough came with PointNet~\cite{8099499}, which directly processed raw points using symmetric functions. Its extension PointNet++~\cite{10.5555/3295222.3295263} captured local structures hierarchically. Subsequent architectures such as PointCNN~\cite{li_pointcnn_2018}, KPConv~\cite{9010002}, and DGCNN~\cite{wang_dynamic_2019} improved geometric modeling through learned kernels and graph-based features. GDANet~\cite{xu2021learning} further introduced deformation-aware graph convolution to enhance feature learning under geometric variations. CurveNet~\cite{xiang2021walk} enhances local geometric modeling by leveraging curve-based neighborhoods, leading to improved robustness against fine-grained surface deformations.

Recently, transformer-based models have emerged as state-of-the-art, including Point Transformer~\cite{zhao2021point} and LFT-Net~\cite{gao2022lft}, which integrate self-attention for local-global reasoning. Others, such as PVT~\cite{zhang2022pvt}, combine voxel and point-based transformers for enhanced feature extraction.

\paragraph{Adversarial Attacks \& Defense on 3D Point Clouds.}
Adversarial attacks on 3D point clouds aim to mislead classifiers by manipulating point coordinates. Gradient-based methods such as C\&W and PGD have been adapted to the 3D setting through geometry-aware loss functions~\cite{xiang_generating_2019, wen2020geometry, tsai2020robust}, while black-box approaches leverage generative models for improved transferability~\cite{zhou2020lg}. 
To counter these threats, various defenses have been developed. Filtering-based methods~\cite{zhou_dup-net_2019} remove outlier points, while input transformations and adversarial detectors have also been explored~\cite{dong2020self, liu2019extending}. Purification-based strategies~\cite{wu_if-defense_2021, li2022robust, sun_critical_2023} aim to restore clean geometry before classification. 
Certified defenses provide formal robustness guarantees. Existing efforts include robustness certification under pose variations and rigid transformations~\cite{lorenz2021robustness, perez20223deformrs}, as well as under point-wise and additive perturbations~\cite{9577754, zhang_pointcert_2023}. However, most current methods remain ineffective against subtle yet structured $\ell_2$-bounded attacks.

\section{Method}
The proposed \textbf{FreqCert} is a certified defense framework that leverages frequency-aware subsampling and majority voting to certify the robustness of point cloud classifiers against $\ell_2$-bounded perturbations.In this section, we first introduce the graph Fourier transform (GFT) as the spectral foundation of our method. Then, we present a frequency-guided subsampling strategy that extracts multiple low-frequency sub-point clouds. Finally, we derive a closed-form robustness radius under the $\ell_2$ threat model and majority voting, offering a tight certification guarantee.

\subsection{Graph Fourier transform}
Unlike images that lie on regular grids and can be naturally processed with the discrete Fourier transform (DFT)\cite{rasheed2020image}, 3D point clouds are unordered and reside in irregular metric spaces, which limits the use of standard frequency tools. To address this, we leverage the graph Fourier transform (GFT) \cite{GFT}, which generalizes the concept of Fourier analysis to non-Euclidean domains by treating point clouds as graphs.

\begin{figure}[t]
  \centering
  \includegraphics[width=\linewidth]{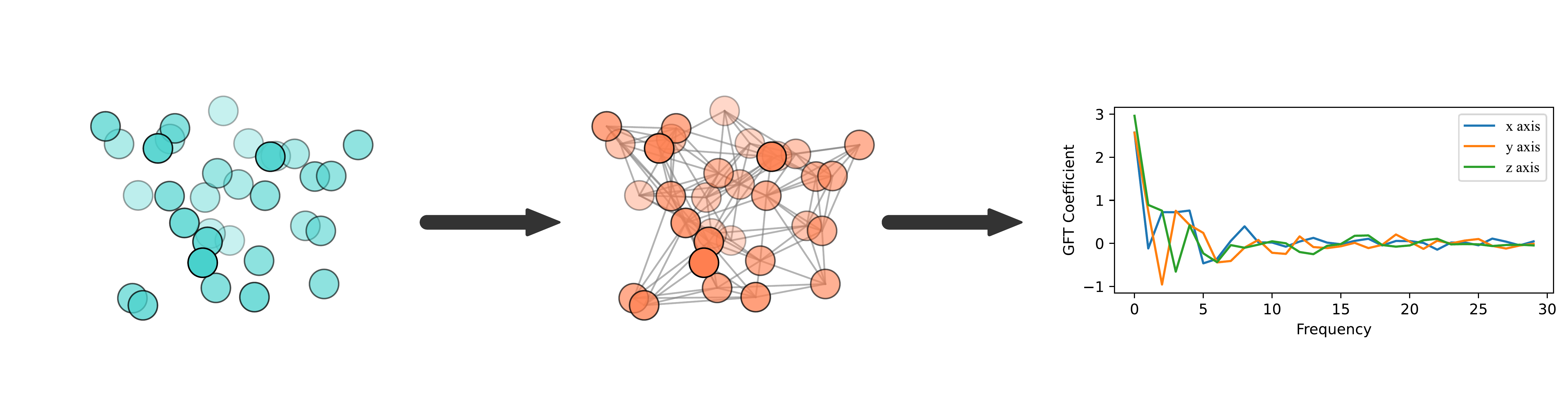}
  \caption{An intuitive visualization of our spectral transformation pipeline. A raw point cloud (left) is first transformed into a $k$-nearest neighbor graph (middle), followed by a graph Fourier transform (right) to derive frequency domain representations.}

  \label{fig:gft_pipeline}
\end{figure}

Let $\mathcal{P} = \{p_i\}_{i=1}^n \subset \mathbb{R}^3$ be a point cloud of $n$ points in Euclidean space. We construct an undirected $k$-nearest neighbor graph $\mathcal{G} = (\mathcal{V}, \mathcal{E})$, where each node $v_i \in \mathcal{V}$ corresponds to a point $p_i \in \mathcal{P}$, and edges $(v_i, v_j) \in \mathcal{E}$ are formed by connecting each point to its $k$ nearest neighbors based on Euclidean distance.

To capture local geometric similarity, edge weights are defined as $w_{ij} = \exp(-\|p_i - p_j\|_2^2)$, forming a weighted adjacency matrix $W$. The degree matrix $D$ is diagonal with entries $D_{ii} = \sum_j W_{ij}$, and the combinatorial graph Laplacian is defined as $L = D - W$.

The Laplacian matrix $L$ is symmetric and positive semi-definite. It can be diagonalized via eigendecomposition as $L = U \Lambda U^\top$, where $U \in \mathbb{R}^{n \times n}$ is an orthonormal matrix of eigenvectors and $\Lambda$ is a diagonal matrix of non-negative eigenvalues. The columns of $U$ form the graph Fourier basis. The entries in $\Lambda$ represent the corresponding graph frequencies, arranged in ascending order from low to high.

Given a graph signal $X \in \mathbb{R}^{n \times c}$, such as the 3D coordinates of the point cloud, the graph Fourier transform is defined as $\widehat{X} = U^\top X$. Each row in $\widehat{X}$ represents the projection of $X$ onto a different frequency component. The inverse transform is given by $X = U \widehat{X}$, allowing full reconstruction from the frequency domain. Figure~\ref{fig:gft_pipeline} visually demonstrates our spectral transformation pipeline, which converts raw point clouds into graph structures and further into frequency representations via GFT.

This frequency decomposition enables a compact and structured analysis of point cloud geometry, which is particularly useful for guiding robust subsampling and perturbation-aware reasoning in downstream tasks.

\subsection{Spectral Subsampling}

To certify robustness against $\ell_2$ perturbations, we must ensure that small changes to the input cannot simultaneously affect all parts of the model. One effective approach is to divide the point cloud into multiple sub-point clouds and base the final decision on a majority vote. If most sub-clouds remain stable under perturbation, the overall prediction remains unchanged.

A key question is how to perform this division. Sampling in the spatial domain, such as by random grouping or clustering in Euclidean space offers little control over how individual point movements affect the slices. A single perturbed point may be assigned to very different slices depending on its shifted location, making the behavior under perturbation difficult to analyze or bound. In addition, spatial closeness alone can be misleading: points that are close in Euclidean space may belong to very different parts of the shape. In contrast, points with similar structural functions may be far apart spatially but share similar spectral responses.

We propose a frequency-guided subsampling algorithm, termed \textbf{dense-overlapping spectral windows (d-OSW)}, to construct robust and analyzable sub-point clouds for certification. Each point in the cloud typically exhibits a dominant response at a particular graph frequency, reflecting its geometric role within the global structure. By identifying these dominant frequencies, we group points based on spectral similarity rather than spatial proximity. To this end, we define a set of overlapping frequency bands and assign each point a soft weight for each band based on the alignment between its dominant frequency and the band center. Sampling is then performed proportionally to these weights, resulting in overlapping sub-point clouds focused on distinct frequency ranges.

We begin by computing the graph Fourier basis of the input point cloud and retaining the first $K$ frequency components. These components are indexed by $0, 1, \dots, K{-}1$. Following ~\citet{miao_improving_2024}, lower frequencies primarily capture global shape information while higher frequencies encode finer geometric details and potential noise. To balance computational efficiency and structural fidelity, we retain only the first $K=128$ components.

The spectral range is evenly divided into $m$ bands. The center of the $b$-th band is given by
\begin{equation}
\mu_b = \left(b + \tfrac{1}{2} \right) \cdot \tfrac{K}{m}, \qquad b = 0, 1, \dots, m{-}1.
\end{equation}

Each point is assigned a dominant frequency index $\nu_i^\star$, defined as the index at which its squared GFT coefficient is maximized. We measure how well this dominant frequency aligns with band $b$ using a Gaussian weight:
\begin{equation}
\gamma_b(i) = \exp\left( -\frac{(\nu_i^\star - \mu_b)^2}{2\sigma^2} \right),
\end{equation}
where $\sigma$ is a bandwidth parameter controlling the overlap between adjacent bands. A smaller $\sigma$ yields sharper, more selective assignments, while a larger $\sigma$ produces smoother, more redundant coverage. Inspired by spectral kernel designs~\cite{hammond2011wavelets,shuman2013emerging}, we set $\sigma = 0.6 \cdot \tfrac{K}{m}$ to ensure effective but not excessive overlap between neighboring bands. This empirically balances slice diversity and stability, and avoids degenerate hard assignments at small $m$.

After computing spectral weights, we normalize them across all points. For each band $b$, we sample exactly $n$ points without replacement from the full point cloud using the normalized weights as sampling probabilities. This yields one sub-point cloud per band.

Repeating the above sampling across all $m$ bands yields $m$ sub-point clouds in total. Since a point may have non-zero weights for multiple bands, it can be included in multiple slices. This overlap introduces controlled redundancy and improves robustness under perturbations.

We refer to this overall procedure and the resulting collection of slices as \textbf{dense-overlapping spectral windows (d-OSW)}. The construction ensures complete spectral coverage while maintaining structural coherence. It also enables precise frequency domain robustness analysis, which we leverage to derive certified $\ell_2$ perturbation bounds.

To ensure consistency between training and certification, we apply the same spectral subsampling strategy during training. Specifically, for each training example, we randomly extract $m$ frequency-guided sub-clouds using d-OSW. The model is trained to correctly classify all sub-clouds. This encourages the network to learn features that are invariant across different spectral slices—an essential property for accurate majority voting at test time. Without such training-time alignment, the model may overfit to global features and fail to generalize when presented with subsampled views during certification.

\begin{algorithm}[tb]
\caption{Dense-Overlapping Spectral Windows (d-OSW)}
\label{alg:dosw}
\textbf{Input}: Point cloud $P = \{p_i\}_{i=1}^N$, GFT basis $\Phi \in \mathbb{R}^{N \times K}$, number of bands $m$, slice size $n$, bandwidth $\sigma$ \\
\textbf{Output}: Set of sub-point clouds $\{S_b\}_{b=0}^{m-1}$

\begin{algorithmic}[1]
\FOR{$i = 1$ to $N$}
    \STATE $\nu_i^\star \leftarrow \arg\max_{\nu} |\Phi_\nu(i)|^2$ \hfill // dominant frequency of point $p_i$
\ENDFOR
\FOR{$b = 0$ to $m-1$}
    \STATE $\mu_b \leftarrow \left(b + \tfrac{1}{2} \right) \cdot \tfrac{K}{m}$ \hfill 
    \FOR{$i = 1$ to $N$}
        \STATE $\gamma_b(i) \leftarrow \exp\left( -\frac{(\nu_i^\star - \mu_b)^2}{2\sigma^2} \right)$ \hfill 
    \ENDFOR
    \STATE Normalize $\{\gamma_b(i)\}_{i=1}^N$ to form a probability distribution
    \STATE $S_b \leftarrow$ sample $n$ points from $P$ without replacement according to weights $\gamma_b$
\ENDFOR
\STATE \textbf{return} $\{S_b\}_{b=0}^{m-1}$
\end{algorithmic}
\end{algorithm}

\subsection{Certified Robustness}
Frequency-aware subsampling offers more than just a way to construct diverse inputs—it enables precise reasoning about how perturbations affect the classifier. Each point is assigned to slices based on its spectral response, and small $\ell_2$ perturbations lead to gradual, predictable changes in this assignment. This continuity allows us to control how input noise propagates through the slicing and voting pipeline.Such analysis is difficult in the spatial domain. When slices are formed by Euclidean grouping or random selection, even minor displacements may abruptly reassign points, making the system sensitive and analytically intractable.

By contrast, the structure imposed by frequency-guided sampling ensures that slice memberships evolve smoothly under perturbation, and redundancy across overlapping slices provides natural resilience. These properties open the door to formal certification.

In the FreqCert framework, robustness is achieved by aggregating the predictions of multiple frequency-aware sub-point clouds. To move beyond empirical robustness and provide formal guarantees, we seek to certify that the final decision remains unchanged under any bounded perturbation of the input.

Let $P = \{p_i\}_{i=1}^N$ be the input point cloud. Using Algorithm~\ref{alg:dosw}, we construct $m$ overlapping frequency slices $\{S_j\}_{j=1}^m$, where each $S_j$ is a subset of points sampled based on spectral similarity. Each slice is passed through a shared base classifier $f$, and the final output is determined by majority voting:
\begin{equation}
    h(P) = \mathrm{MajorityVote}\left( \{ f(S_j) \}_{j=1}^m \right). \label{eq:majority}
\end{equation}

We define certified robustness with respect to the prediction function $h(P)$, which aggregates the results from $m$ frequency-based slices. A prediction is certifiably robust at $P \in \mathbb{R}^{N \times 3}$ if it remains unchanged under any $\ell_2$-bounded perturbation to the input. Specifically, we consider perturbed point clouds $P' = P + \Delta$, where each row $\Delta_i$ satisfies $\|\Delta_i\|_2 \le \varepsilon$. The goal is to find the largest $\varepsilon > 0$ such that
\begin{equation}
    h(P') = h(P), \quad \forall P' \text{ with } \|\Delta_i\|_2 \le \varepsilon \text{ for all } i. \label{eq:certdef}
\end{equation}
This definition corresponds to a per-point $\ell_2$ ball of radius $\varepsilon$ and reflects the total robustness of the system under structured perturbations.

To characterize certified robustness, we analyze how input perturbations influence slice assignments. Each point $p_i$ is grouped based on its dominant graph frequency—the frequency at which its graph Fourier coefficient is maximized.

The sensitivity of this assignment can be quantified by the \textbf{spectral margin} $g_i$, which measures how close point $p_i$ is to changing its slice under perturbation. Let $\nu_i^\star$ be the index of the frequency at which $p_i$ exhibits the largest spectral energy, i.e., the dominant frequency. Then the spectral margin $g_i$ is defined as the minimum distance from $\nu_i^\star$ to the center of any other band:
\begin{equation}
g_i = \min_{b \in \{0, \dots, m-1\}} \left| \nu_i^\star - \mu_b \right|. \label{eq:margin}
\end{equation}
A larger $g_i$ indicates that $p_i$ is farther from band boundaries, making it more robust to perturbations that could alter its slice assignment.

The effect of perturbation on spectral responses is further governed by the graph Laplacian's eigengap $\Delta \lambda$, which controls how sharply frequencies vary with changes in geometry.

During slice construction, each point may softly contribute to multiple bands. Let $\kappa$ be the maximum number of slices a point can appear in; if $t$ points shift their dominant frequencies under perturbation, at most $\kappa t$ slices are affected.

We next provide two formal guarantees that characterize the certified $\ell_2$ robustness of our method.

\begin{theorem}[Certified Perturbation Size for Slice Stability]\label{thm:slice}
Let $P$ be an input point cloud, and suppose the Frobenius norm of the perturbation satisfies
\begin{equation}
    \|\Delta P\|_F < R_{\text{slice}}, \qquad
    R_{\text{slice}} := \frac{g_{\min}\,\Delta\lambda}{4\sqrt{K}}, \label{eq:Rslice}
\end{equation}
where $g_{\min}$ is the minimum spectral margin across all points,
$\Delta\lambda$ is the Laplacian eigengap at index $K$, and $K$ is the number of retained frequencies.

Then no slice changes its prediction, and the overall classification remains unaffected.

Moreover, the bound is tight: any perturbation with norm greater than $R_{\text{slice}}$ may change the predicted label of at least one slice.
\end{theorem}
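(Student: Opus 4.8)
The plan is to reduce the theorem to a statement about stability of the dominant-frequency assignment $\{\nu_i^\star\}_{i=1}^N$ and then close it with matrix perturbation theory. The starting observation is that, for a fixed sampling draw, the slices $\{S_j\}$ returned by Algorithm~\ref{alg:dosw} --- hence each slice's input point set, hence its label --- depend on the input only through $\{\nu_i^\star\}$, since the sampling weights $\gamma_b(i)$ are functions of $P$ only via $\nu_i^\star$. Consequently no slice is affected (in the sense of the discussion preceding the theorem) and the majority vote $h$ is unchanged, provided every $\nu_i^\star$ is preserved under $\Delta P$. By Eq.~\eqref{eq:margin}, $\nu_i^\star$ is the argmax over the $K$ retained frequencies of the per-point spectral energy profile read off from the GFT basis, and it survives as long as the perturbation cannot drift this index by as much as $g_i$ toward a competing band centre; so it suffices to upper bound $\max_i$ (drift of $\nu_i^\star$) by a quantity in $\|\Delta P\|_F$ and compare it with $g_{\min}=\min_i g_i$.

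I would carry this out in three steps. \emph{Step A (coordinates to Laplacian).} The affinities $w_{ij}=\exp(-\|p_i-p_j\|_2^2)$ have uniformly bounded gradient, $\|\nabla_{p_i}w_{ij}\|_2 = 2\|p_i-p_j\|_2\,e^{-\|p_i-p_j\|_2^2}\le\sqrt{2/e}$, so on the regime where the $k$-nearest-neighbour adjacency pattern is fixed --- an open neighbourhood of $P$ on which $L=D-W$ is real-analytic in $P$ and which generically contains the $R_{\text{slice}}$-ball --- the map $P\mapsto L$ is globally Lipschitz, giving $\|\Delta L\|_2\le c_1\|\Delta P\|_F$, with $c_1$ collecting the weight-gradient bound and the degree/normalisation factors. \emph{Step B (Laplacian to spectrum).} Weyl's inequality controls the eigenvalue shifts, and the Davis--Kahan $\sin\Theta$ theorem --- applied with the eigengap $\Delta\lambda$ at index $K$ as the relevant spectral separation, and assuming generically that the retained eigenvalues are simple so that each mode $u_\nu$, not merely their span, is pinned down --- bounds the eigenvector rotation, yielding $\|\Delta\Phi\|_F\le c_2\,\|\Delta L\|_2/\Delta\lambda$ and hence, for every point, $\|\Delta\Phi_{i,:}\|_2\le\|\Delta\Phi\|_F\le c_2\sqrt{K}\,\|\Delta P\|_F/\Delta\lambda$ (the $\sqrt K$ arising from aggregating the per-mode bounds over the $K$ retained frequencies). \emph{Step C (spectrum to argmax stability).} Because $\nu_i^\star$ is the generically unique argmax of the per-point energy profile, a row displacement of magnitude $\eta$ can move this index by at most $O(\eta)$; hence no $\nu_i^\star$ changes once $c_2\sqrt K\,\|\Delta P\|_F/\Delta\lambda<g_{\min}$. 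Absorbing the absolute constants $c_1,c_2$ into the factor $4$ gives exactly $\|\Delta P\|_F<R_{\text{slice}}=g_{\min}\Delta\lambda/(4\sqrt K)$, which would yield the first assertion of the theorem.

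For tightness I would build a matching instance by co-aligning the three steps. Pick $i^\star$ realising $g_{\min}$, let $\nu^\star=\nu_{i^\star}^\star$ sit nearest band centre $\mu_{b^\star}$, and let $\mu_{b'}$ be the nearest competing centre (distance $g_{\min}$). Choose the perturbation supported on $p_{i^\star}$ and its graph neighbours so that (i) the weight gradients are attained at their maximal value $\sqrt{2/e}$, (ii) the resulting low-rank $\Delta L$ is the Davis--Kahan extremal perturbation rotating $u_{\nu^\star}$ toward the adjacent mode, and (iii) this rotation moves $\Phi_{i^\star,:}$ exactly along the direction that raises its $\mu_{b'}$-aligned component at the expense of its $\mu_{b^\star}$-aligned one. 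Along this single direction all inequalities of Steps A--C hold with equality to leading order, so a perturbation of Frobenius norm $R_{\text{slice}}+\delta$ suffices to change $\nu_{i^\star}^\star$ --- the budget $g_{\min}$ is exactly consumed --- and hence every weight $\gamma_b(i^\star)$; with the sampling draw fixed this alters at least one slice's point set, and a base classifier $f$ that separates the two possible contents of that slice (which the worst-case nature of a certificate forces us to allow) then flips its label, showing that $R_{\text{slice}}$ cannot be enlarged.

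The step I expect to be the main obstacle is Step C: converting a \emph{continuous} bound on the displacement of a GFT row into a bound on the \emph{integer} drift of its argmax, since the argmax is discontinuous and can jump far when two per-point spectral energies are nearly tied --- this is precisely why the spectral margin $g_i$, together with the genericity assumption that each $\nu_i^\star$ is uniquely attained, is indispensable, and why obtaining the clean constant $4$ rather than a crude universal constant needs the Gaussian-affinity Lipschitz constant tracked jointly with the degree/normalisation factors. Two further, more routine points are: checking that the $k$-nearest-neighbour combinatorial structure is stable throughout the certified ball (handled by intersecting with the structural-stability radius), and, for tightness, verifying that one perturbation direction can be simultaneously extremal for the weight map, for Davis--Kahan, and for the band-boundary crossing, which is geometrically possible but must be exhibited as above.
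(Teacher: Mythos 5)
Your route is genuinely different from the paper's, and it contains gaps that prevent it from delivering the stated bound. The paper's proof never perturbs the graph Laplacian or its eigenvectors at all: it holds the GFT basis $U$ fixed, bounds the change of each projected energy $|u_\nu^\top p_i|^2$ under a move of $p_i$ by $2\|\Delta p_i\|_2+\|\Delta p_i\|_2^2$ via Cauchy--Schwarz, and compares this with a claimed lower bound $\delta_i \ge g_i\,\Delta\lambda/(4\sqrt{K})$ on the energy gap between the dominant frequency and every competitor (Eq.~\eqref{eq:gap_bound}); the eigengap enters only through that margin-to-energy-gap estimate, not through any eigenvector rotation. Your Steps A--B instead push the perturbation through $L$ with a Lipschitz bound and Davis--Kahan. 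That is a legitimate (arguably more faithful) model of what happens to the GFT basis when points move, but it cannot produce the exact radius in \eqref{eq:Rslice}: the constant $c_1$ depends on $k$, the degrees, and the number of points (not just the weight-gradient bound $\sqrt{2/e}$), $c_2$ is likewise instance-dependent, and such factors cannot be ``absorbed into the factor $4$.'' At best your chain yields a radius of the same functional form with an unspecified graph-dependent constant, which is a materially weaker statement than the theorem's closed-form $R_{\text{slice}}$.

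The more serious gap is Step C, which you flag but do not close. The margin $g_i$ is measured in frequency-index units (distance from $\nu_i^\star$ to the nearest band centre), while Steps A--B control a Euclidean displacement of the GFT rows; nothing guarantees the argmax index drifts by only $O(\eta)$ under a row displacement of size $\eta$ --- if two far-apart frequencies carry nearly equal energy, an arbitrarily small displacement can move the argmax arbitrarily far in index, and your genericity remark does not quantify this. Closing the step requires precisely the quantitative link between energy differences and index differences that the paper posits in Eq.~\eqref{eq:gap_bound}, so as written your argument terminates one step short of the conclusion. The same issue undermines the tightness construction: simultaneous saturation of the Lipschitz, Davis--Kahan, and band-crossing inequalities along one direction is asserted rather than exhibited, and the claim that the budget ``exactly consumes'' $g_{\min}$ presupposes the unproven Step C correspondence. (One caveat shared with the paper, hence not held against you: stability of slice membership does not by itself imply stability of each slice's prediction, since the base classifier sees the perturbed coordinates; both arguments pass over this point.)
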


The result above guarantees robustness by freezing all slice assignments, which can be overly conservative in practice. A more flexible guarantee can be obtained by allowing a limited number of slice changes—as long as the majority vote remains unaffected. The following result formalizes this relaxed but stronger form of certified robustness.

\begin{theorem}[Certified Perturbation Size for $\ell_2$ Robustness]\label{thm:majority}
Let \( g_{(1)} \le \cdots \le g_{(N)} \) be the sorted spectral margins of the points. Define
\[
\alpha = \left\lfloor \frac{m - 1}{2} \right\rfloor, \quad p = \left\lfloor \frac{\alpha}{\kappa} \right\rfloor.
\]
Then for
\begin{equation}
    R^\star := \frac{\Delta \lambda\,\sqrt{K}}{8}
    \left( \sum_{i=1}^{p+1} g_{(i)}^2 \right)^{1/2},                                    \label{eq:Rmajority}
\end{equation}

any perturbation \( \Delta P \) with \( \| \Delta P \|_F < R^\star \) affects at most \( p \) points, and thus no more than \( \kappa p \le \alpha \) slices. The majority vote, and hence the prediction \( h(P) \), remains unchanged.

This radius is tight: any uniform improvement of \( R^\star \) would fail in the worst case without additional assumptions.
\end{theorem}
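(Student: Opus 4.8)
The plan is to bootstrap from the pointwise perturbation analysis that already underlies Theorem~\ref{thm:slice}, and then recast the question ``how many points can shift their dominant frequency under an $\ell_2$ budget?'' as a budgeted selection problem whose optimum is attained greedily by the smallest spectral margins.

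\textbf{Step 1: per-point flipping cost.} First I would isolate from the proof of Theorem~\ref{thm:slice} the local statement it is built on: the dominant frequency $\nu_i^\star$ of a point $p_i$ can change only if the portion of $\Delta P$ supported on the $k$-nearest-neighbour star of $p_i$ has Frobenius norm at least $c\,g_i$, where $g_i$ is the spectral margin of \eqref{eq:margin} and $c$ is the constant that makes $c\,g_{\min}$ equal $R_{\text{slice}}$ (the $p=0$ case of the present theorem). This in turn comes from chaining (i) the Lipschitz bound $\|\Delta L\|_F \le C\|\Delta P\|_F$ for the Gaussian-weighted Laplacian, (ii) a Davis--Kahan / $\sin\Theta$ estimate bounding the rotation of the $K$ retained eigenvectors by $\|\Delta L\|_F/\Delta\lambda$, and (iii) the fact that reordering the top squared GFT coefficient of $p_i$ forces a change of at least $g_i$ in its coefficient vector. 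The only new emphasis relative to Theorem~\ref{thm:slice} is that this bound is genuinely \emph{local} to $p_i$'s neighbourhood, so different points' flips are charged against nearly disjoint parts of $\Delta P$.

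\textbf{Step 2: aggregation and the budget argument.} Suppose $\|\Delta P\|_F < R^\star$ but a set $I$ of $t \ge p+1$ points shift their dominant frequencies. Since the perturbation forcing point $i\in I$ to flip is essentially supported on the edges incident to $p_i$ and these stars are nearly disjoint, the squared Frobenius norms add, giving $\|\Delta P\|_F^2 \ge c^2\sum_{i\in I} g_i^2 \ge c^2\sum_{i=1}^{p+1} g_{(i)}^2$, the last inequality because the $p+1$ smallest margins minimise the sum over any $p+1$ indices. Hence $\|\Delta P\|_F \ge c\,\bigl(\sum_{i=1}^{p+1} g_{(i)}^2\bigr)^{1/2} = R^\star$ as in \eqref{eq:Rmajority}, a contradiction; so at most $p$ points shift. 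Each shifted point lies in at most $\kappa$ slices, so at most $\kappa p$ slices change their point set, and by $p=\lfloor\alpha/\kappa\rfloor$ we get $\kappa p \le \alpha = \lfloor (m-1)/2\rfloor$. A short parity check ($m=2\ell+1$ gives $m-\alpha=\ell+1$; $m=2\ell$ gives $m-\alpha=\ell+1$) shows the $\ge m-\alpha$ untouched slices form a strict majority of the $m$ slices, and each still outputs its original label, so the majority vote in \eqref{eq:majority}, hence $h(P)$, is unchanged.

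\textbf{Step 3: tightness, and the main obstacle.} For the matching direction, given any target radius $R' > R^\star$ I would build a perturbation of Frobenius norm in $(R^\star,R')$ that allocates energy $\propto g_{(i)}$ to the neighbourhood star of the $i$-th smallest-margin point, simultaneously flipping all $p+1$ of them; since $p=\lfloor\alpha/\kappa\rfloor$ forces $\kappa(p+1) \ge \alpha+1$, these flips can be distributed (choosing the base classifier $f$'s outputs on the altered slices) so that at least $\alpha+1$ slices move to a common wrong label $y'$, and $2\alpha+2 \ge m$ then makes $y'$ tie or beat the original label, so $h(P')\neq h(P)$. This shows no uniformly larger radius is certifiable in general. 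The delicate point throughout is the quadrature additivity invoked in Step~2: Theorem~\ref{thm:slice} only ever charges one cheapest point, whereas here one must rule out an adversary exploiting shared spectral structure to flip many points with a globally ``efficient'' low-norm perturbation. Making this rigorous needs either a bounded-overlap assumption on the $k$-NN stars (absorbed into $C$, hence $c$) or a packing/orthogonality argument on the induced perturbation operator — which is exactly the ``without additional assumptions'' hedge in the statement.
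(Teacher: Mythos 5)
There is a genuine gap, and it sits exactly where you flag it: the quadrature-additivity step in your Step~2. You derive the per-point flip cost via a Laplacian perturbation bound plus a Davis--Kahan rotation of the retained eigenvectors, and then charge each flip to the part of $\Delta P$ supported on that point's $k$-NN star, hoping the squared costs add because the stars are ``nearly disjoint.'' In that framework the additivity simply is not available: the retained eigenbasis is a \emph{global} object, so a single low-norm perturbation of the graph rotates the shared basis once and can reorder the dominant coefficients of many points simultaneously; the cost of flipping point $i$ is not localized to edges incident to $p_i$, and no packing argument on neighbourhood stars will recover $\|\Delta P\|_F^2 \ge c^2\sum_{i\in I} g_i^2$. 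Note also that the hedge ``without additional assumptions'' in the theorem statement qualifies the \emph{tightness} claim, not the certified radius itself, so the missing step cannot be absorbed there.

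The paper's proof avoids this problem by never leaving the fixed-basis, row-wise picture. Its per-point condition (inherited from Theorem~\ref{thm:slice}) is stated directly in terms of $\|\Delta p_i\|_2$, i.e.\ a point flips only if its own row of $\Delta P$ exceeds the threshold $g_i\,\Delta\lambda/(4\sqrt{K})$; since $\|\Delta P\|_F^2=\sum_i\|\Delta p_i\|_2^2$ \emph{exactly}, assuming $p+1$ flips immediately forces $\|\Delta P\|_F^2\ge\sum_{i=1}^{p+1}\bigl(g_{(i)}\,\Delta\lambda/(4\sqrt{K})\bigr)^2$ (the $p{+}1$ smallest margins minimizing the sum, as you correctly argue), contradicting the radius --- no disjoint-support or Davis--Kahan machinery is needed, which is precisely the ingredient your route lacks. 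Your Steps on the voting count ($\kappa p\le\alpha$, majority preserved) and on tightness (perturb the $p{+}1$ smallest-margin points at threshold in orthogonal directions so that $\kappa(p{+}1)\ge\alpha{+}1$ slices can defect) match the paper. A secondary issue: with your constant $c=\Delta\lambda/(4\sqrt{K})$ the bound you obtain is $\frac{\Delta\lambda}{4\sqrt{K}}\bigl(\sum_{i=1}^{p+1}g_{(i)}^2\bigr)^{1/2}$, which is not the stated $R^\star=\frac{\Delta\lambda\sqrt{K}}{8}\bigl(\sum_{i=1}^{p+1}g_{(i)}^2\bigr)^{1/2}$; the paper's own proof contains the same $\sqrt{K}$ discrepancy between its per-point threshold and its summed bound, so this is less your error than an inconsistency you should not have asserted as an equality.
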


\begin{proof}
See Appendix for the proofs of Theorem 1 and Theorem 2.
\end{proof}

Eqs.\eqref{eq:Rmajority} reveal that the certified radius depends on three key factors: the spectral margins of individual points, the Laplacian eigengap $\Delta\lambda$, and the overlap multiplicity $\kappa$. Larger margins and a wider eigengap strengthen robustness, while a lower $\kappa$ reduces the number of slices affected by each perturbation. These insights suggest that designing d-OSW to amplify spectral separation and limit redundancy—without sacrificing coverage—can directly improve the certified $\ell_2$ radius.

\section{Experiments}
\subsection{Experimental Setup}
\subsubsection{Datasets and moudles}
We conduct experiments on two widely used datasets: \textbf{ModelNet40}~\cite{wu20153d} and \textbf{ScanObjectNN}~\cite{uy-scanobjectnn-iccv19}. ModelNet40 consists of 12,311 synthetic 3D CAD models from 40 object categories, with 9,843 samples for training and 2,468 for testing. Each object is uniformly sampled into 1,024 surface points.
To assess robustness under more realistic conditions, we also evaluate on ScanObjectNN, which contains 2,902 real-world scanned objects from 15 categories. We adopt the PB\_T50\_RS variant, which includes background clutter and partial occlusions. Each object is represented by 2,048 points, and the dataset is split into 2,048 training and 881 test samples.
We use PointNet~\cite{8099499} and DGCNN~\cite{10.1145/3326362} as representative classification backbones due to their widespread use in point cloud recognition tasks.

\subsubsection{Compared methods}
We compare FreqCert with two baselines. The undefended classifier refers to the standard point cloud model without any robustness enhancement, trained on clean data. We evaluate it to assess the impact of our certification strategy. In addition, we include Randomized Smoothing (RS)~\cite{pmlr-v97-cohen19c}, a widely used certified defense for $\ell_2$ perturbations. RS adds isotropic Gaussian noise to the input and performs majority voting over multiple predictions. It provides a certified radius within which the classifier's prediction remains provably unchanged.

\subsubsection{Evaluation Metrics}

We evaluate robustness from both certified and empirical perspectives. \textit{Certified accuracy} is the proportion of test samples that are correctly classified and provably robust against all $\ell_2$ perturbations within a radius $\epsilon$, providing a worst-case guarantee. In contrast, \textit{empirical accuracy} reflects the proportion of samples that remain correctly classified under specific adversarial attacks of strength $\epsilon$. As it accounts for all possible perturbations, certified accuracy is always a conservative lower bound on empirical accuracy at the same $\epsilon$.

In our experiments, empirical robustness is evaluated using adversarial examples generated by projected gradient descent (PGD)~\cite{xiang_generating_2019}, a widely adopted attack in the point cloud literature. Given a clean input point cloud $\mathbf{X}_o \in \mathbb{R}^{N \times 3}$, consisting of $N$ points in 3D space, we generate a perturbed version $\mathbf{X}_o^{\text{adv}}$ such that the total displacement is constrained by $\|\mathbf{X}_o^{\text{adv}} - \mathbf{X}_o\|_2 \leq \epsilon$, where $\epsilon$ controls the attack strength. We evaluate robustness across a range of $\epsilon$ values and report both certified and empirical accuracy as functions of $\epsilon$ in the final robustness plots.

\subsubsection{Parameter setting}
Main experiments use DGCNN~\cite{10.1145/3326362} as the backbone, with a 20-nearest neighbor graph, trained for 250 epochs using the Adam optimizer with a batch size of 32 and an initial learning rate of 0.001. For FreqCert, each point cloud is decomposed into \(m=32\) frequency-guided sub-clouds, and each sub-cloud contains \(n=128\) points. These values are chosen to balance spectral coverage and certification tightness. In particular, \(n\) must not be too small under \(\ell_2\)-norm perturbations, since low-resolution sub-clouds are more sensitive to global geometric distortions, which degrades prediction stability and reduces certified accuracy. The same subsampling strategy is applied during training to ensure consistency between learning and certification. For the Randomized Smoothing baseline, we adopt $\sigma = 0.5$, which provides a reasonable trade-off between clean accuracy and certified robustness.

\subsection{Experimental Results}
\subsubsection{Main results:}
As shown in Fig.~\ref{fig:cert_emp}, we compare the certified and empirical accuracy of FreqCert and randomized smoothing under $\ell_2$-norm bounded point perturbations on ModelNet40. At $\epsilon = 0$, the undefended model achieves the highest empirical accuracy, but its performance quickly drops as $\epsilon$ increases, revealing its vulnerability to structured input distortions.

\begin{figure}[t]
  \centering
  \begin{subfigure}[b]{0.48\linewidth}
    \includegraphics[width=\linewidth]{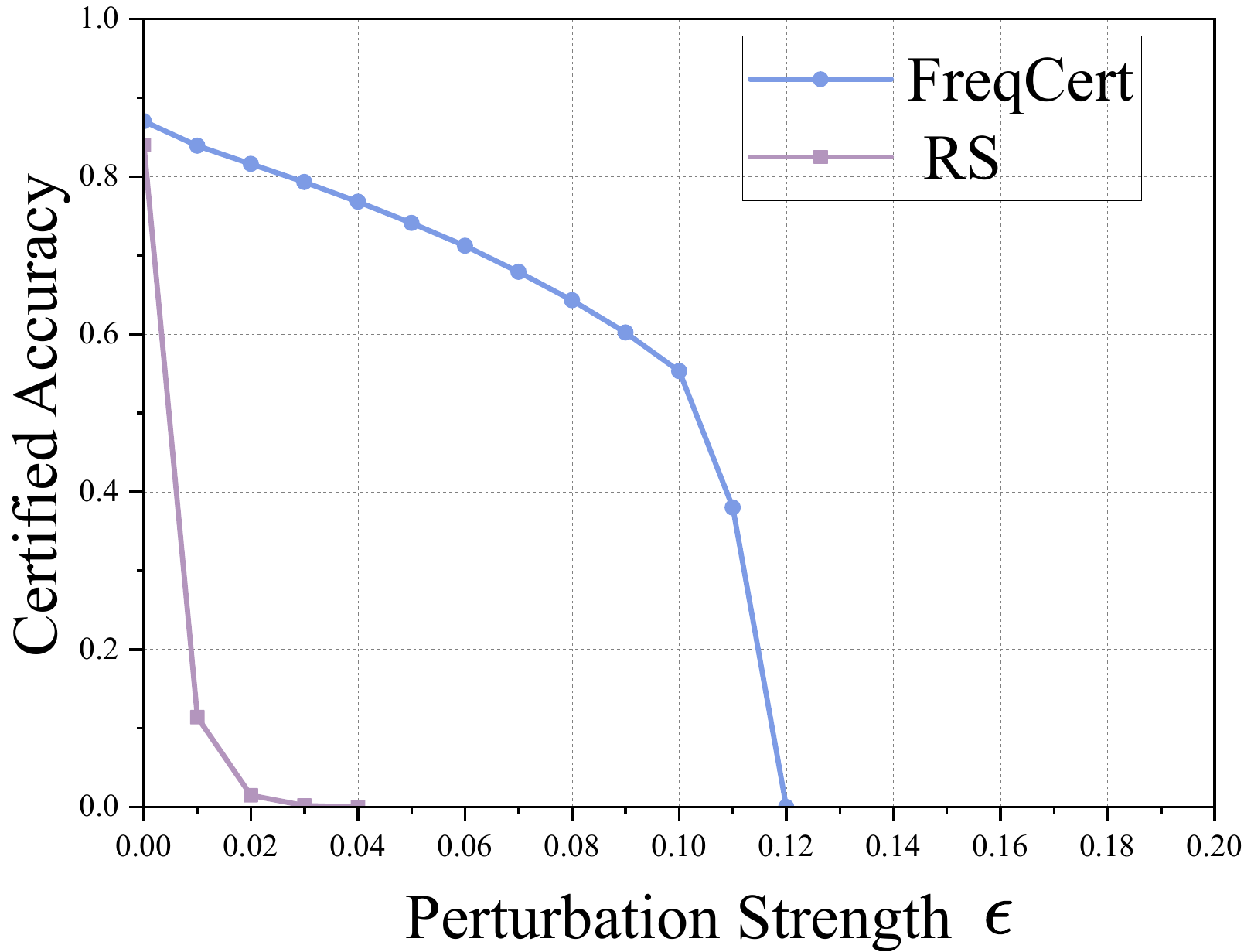}
    \caption{}
    \label{fig:cert}
  \end{subfigure}
  \begin{subfigure}[b]{0.48\linewidth}
    \includegraphics[width=\linewidth]{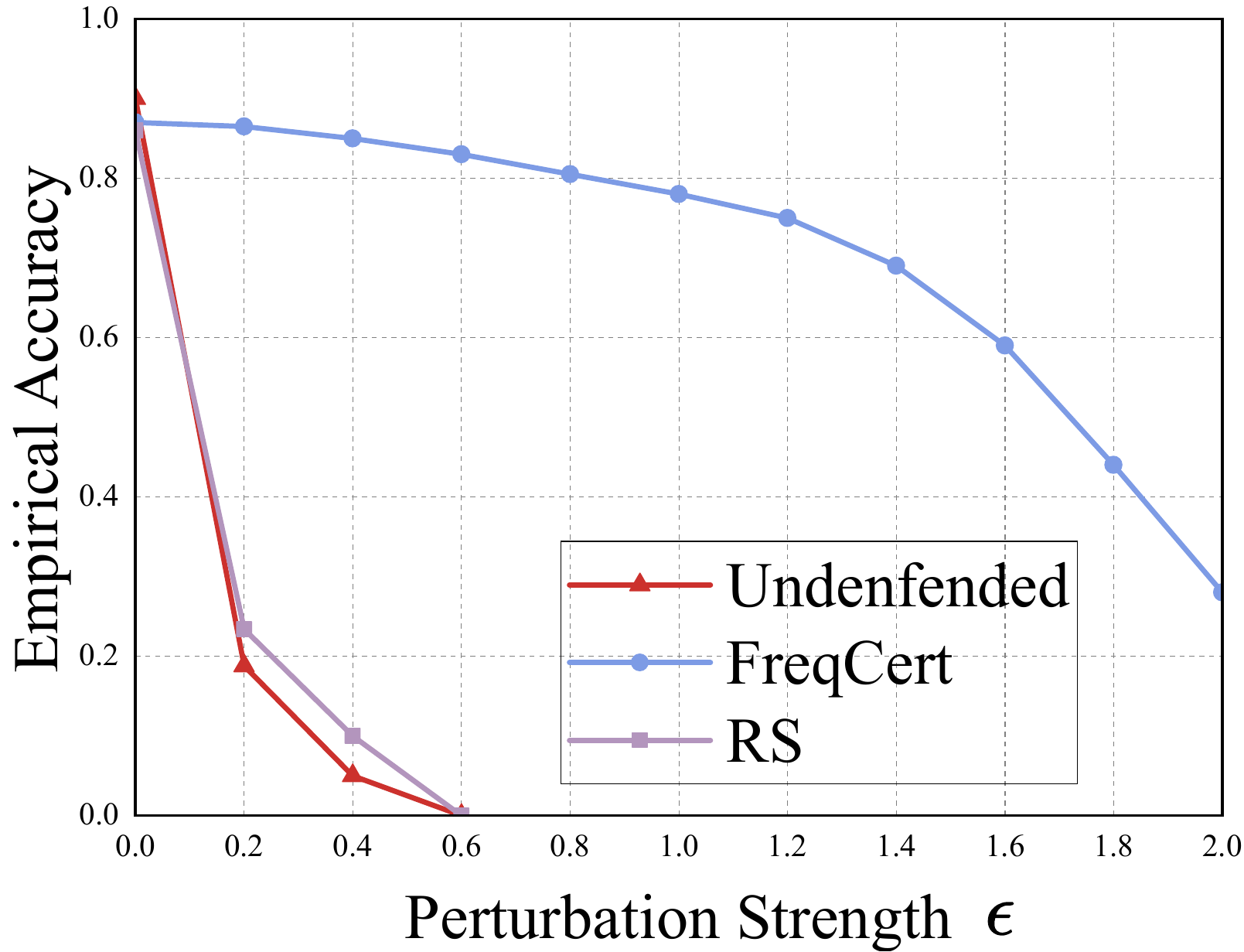}
    \caption{}
    \label{fig:emp}
  \end{subfigure}
  \caption{
    Certified accuracy (a) and empirical accuracy (b) on the ModelNet40 dataset under increasing $\ell_2$ perturbation strength $\epsilon$.
    FreqCert consistently achieves higher robustness than randomized smoothing across all tested perturbation levels.
    Results on the ScanObjectNN dataset are provided in the appendix.
  }
  \label{fig:cert_emp}
\end{figure}

In terms of certified robustness, FreqCert demonstrates a clear advantage. It maintains non-trivial certified accuracy up to \( \epsilon = 0.12 \), whereas randomized smoothing fails to certify any samples beyond \( \epsilon = 0.02 \). The certified accuracy curve of FreqCert decays smoothly, with a noticeable drop-off near \( \epsilon = 0.10 \), indicating that its robustness degrades in a controlled manner as perturbations increase. In contrast, the certification curve of randomized smoothing remains nearly flat and close to zero throughout, reflecting its poor ability to provide formal guarantees on 3D data.

\begin{figure*}[t]
  \centering
  \begin{subfigure}[b]{0.32\linewidth}
    \includegraphics[width=\linewidth]{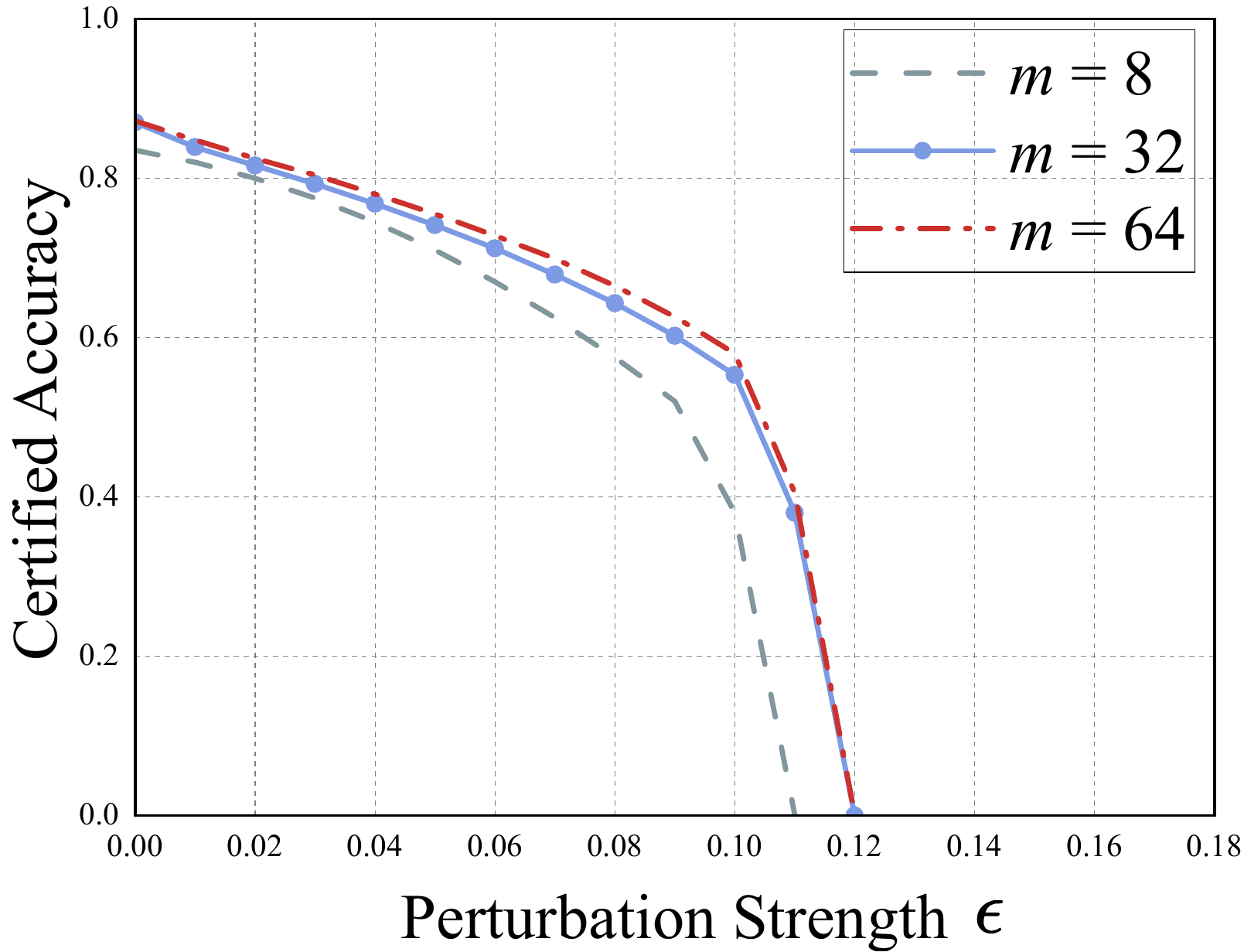}
    \caption{}
    \label{fig:4a}
  \end{subfigure}
  \hfill
  \begin{subfigure}[b]{0.32\linewidth}
    \includegraphics[width=\linewidth]{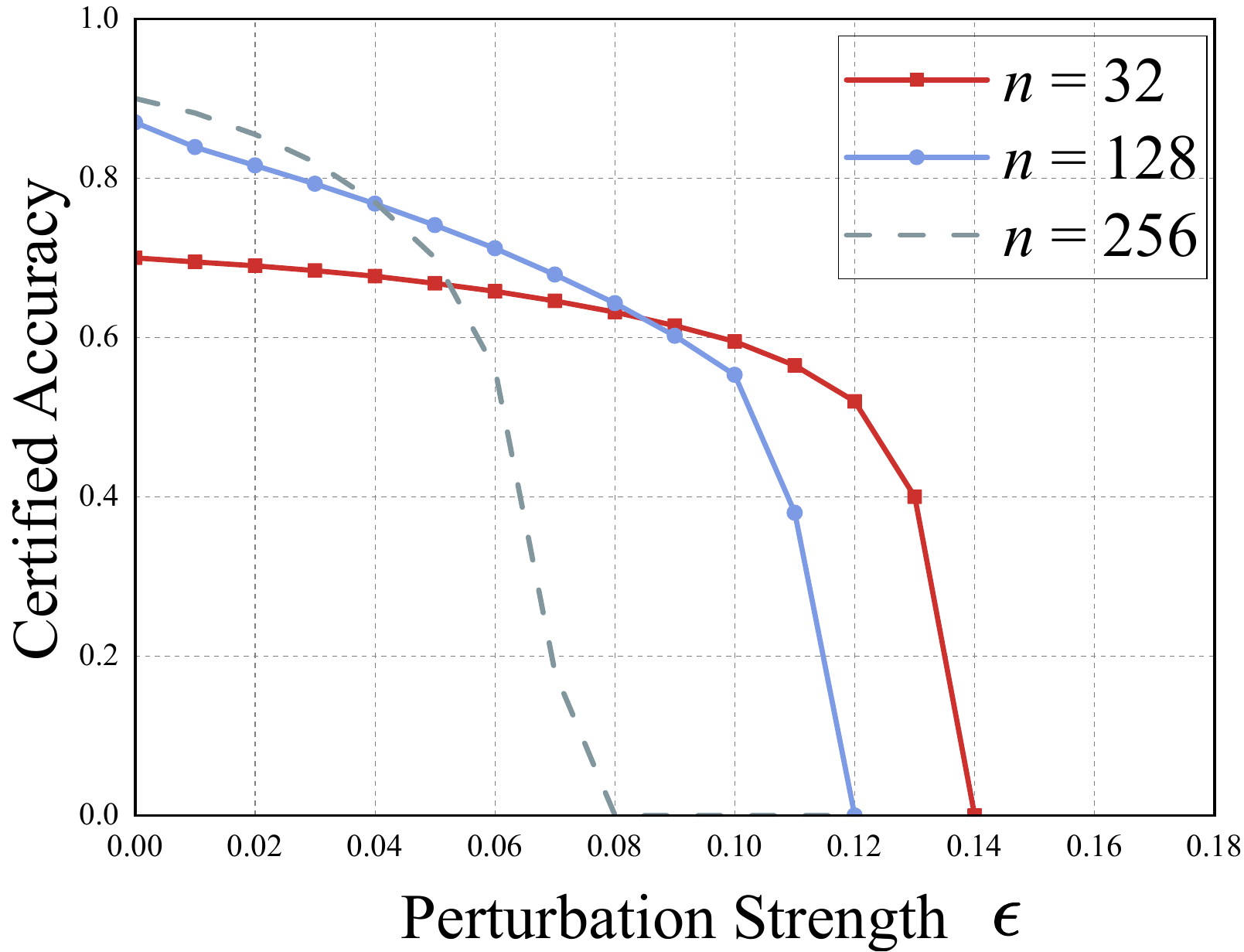}
    \caption{}
    \label{fig:4b}
  \end{subfigure}
  \hfill
  \begin{subfigure}[b]{0.32\linewidth}
    \includegraphics[width=\linewidth]{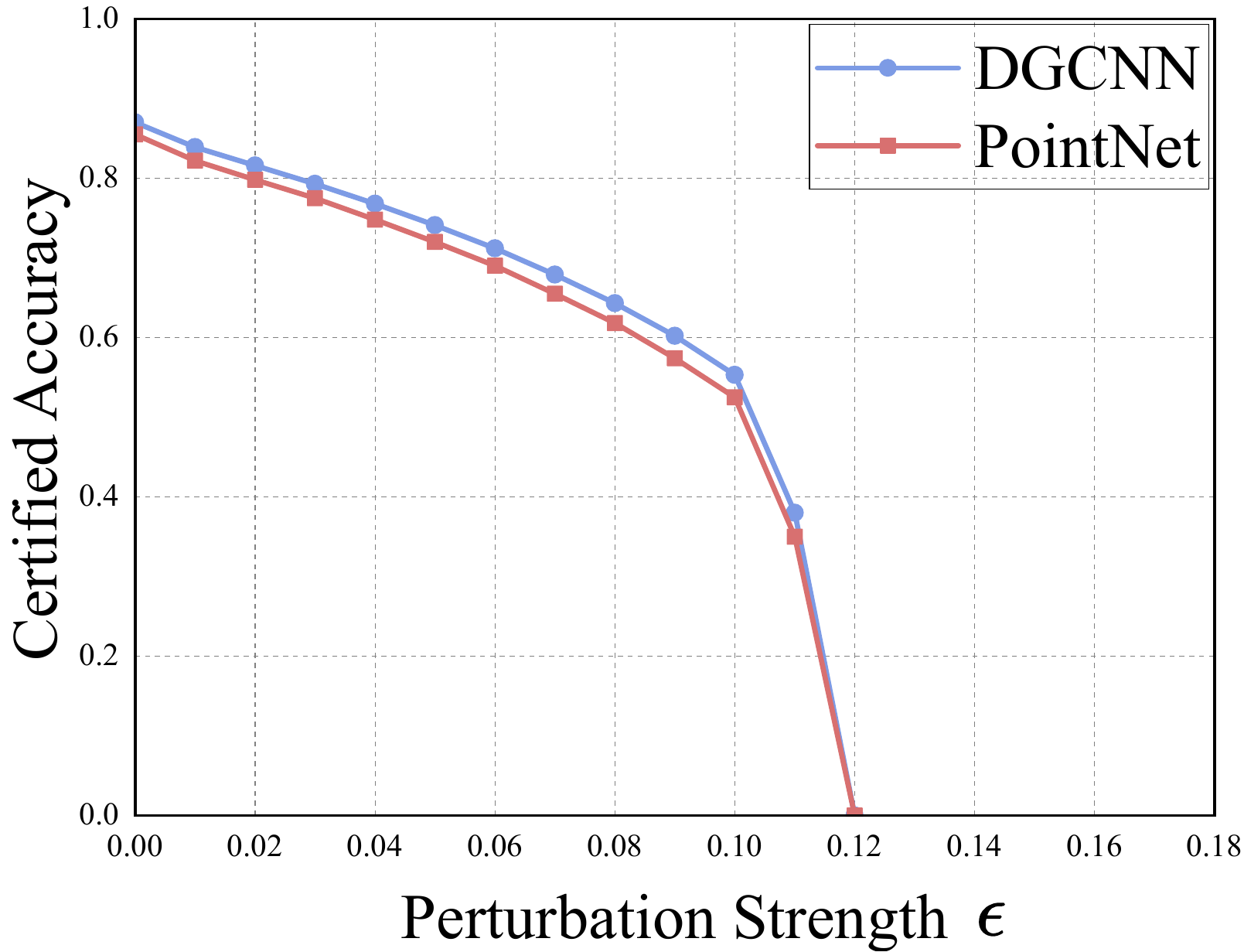}
    \caption{}
    \label{fig:4c}
  \end{subfigure}
  \caption{
    Impact of key factors on certified accuracy.
    (a) Impact of  $m$ on FreqCert.
    (b) Impact of  $n$ on FreqCert.
    (c) Comparing certified accuracy of FreqCert across different backbone architectures
  }
  \label{fig:param_cert}
\end{figure*}

The sharp contrast between the two curves highlights a key limitation of additive-noise-based methods when applied to point clouds: even weak Gaussian perturbations can disrupt the underlying geometry, breaking the conditions required for certification. In contrast, FreqCert leverages frequency-domain decomposition and structured voting to certify a larger fraction of test samples across all tested radii. The smooth decay of its certification curve suggests that the method is inherently more resilient to localized distortions and better captures global structural properties.

On the empirical side, FreqCert maintains high robustness against PGD attacks, preserving over 70\% accuracy at $\epsilon = 1.0$ and retaining non-trivial performance even at $\epsilon = 2.0$. Compared to randomized smoothing, which collapses rapidly and becomes ineffective beyond $\epsilon = 0.4$, FreqCert degrades more gradually. This behavior reflects the advantage of frequency-based subsampling: by partitioning the input into multiple overlapping spectral slices, localized perturbations are prevented from dominating the prediction, and the majority vote mechanism mitigates adversarial impact across slices.

\subsubsection{Comparison with Empirical Defenses:}
To further evaluate the empirical robustness of FreqCert, we compare it with several representative empirical defenses under strong adversarial attacks, including DUP-Net~\cite{zhou_dup-net_2019} and IF-Defense~\cite{wu_if-defense_2021}. DUP-Net enhances robustness through outlier removal and upsampling, while IF-Defense projects the input onto an implicit surface to mitigate adversarial distortions.

In addition to the standard PGD attack, we also include AdvPC~\cite{WOS:001500580900015}, a point-cloud-specific method that generates semantically meaningful perturbations by approximating latent geometric deformations.

All defenses are evaluated under a fixed perturbation strength of $\epsilon = 1.25$, defined as the $\ell_2$ norm of the total displacement applied to the point cloud. For fair comparison, all methods use the same set of backbones: PointNet~\cite{8099499},  DGCNN~\cite{10.1145/3326362}, and CurveNet~\cite{xiang2021walk}.

\begin{table}[t]
\centering
\small
\begin{tabular}{l lccc}
\toprule
\multicolumn{2}{c}{} & PointNet & DGCNN & CurveNet \\
\midrule
\multirow{2}{*}{FreqCert}   & PGD    & \textbf{78.9} & \textbf{74.8} & \textbf{70.5} \\
                            & AdvPC  & \textbf{69.4} & \textbf{71.7} & \textbf{73.3} \\
\midrule
\multirow{2}{*}{IF-Defense} & PGD    & 74.1 & 70.6 & 69.6 \\
                            & AdvPC  & 69.3 & 64.7 & 57.9 \\
\midrule
\multirow{2}{*}{DUP-Net}    & PGD    & 74.8 & 68.6 & 69.2 \\
                            & AdvPC  & 68.9 & 67.0 & 68.4 \\
\bottomrule
\end{tabular}
\caption{Empirical accuracy (\%) under PGD and AdvPC attacks with $\ell_2$ perturbation of strength $\epsilon = 1.25$.
}
\label{tab:empirical_def}
\end{table}

As shown in Table~\ref{tab:empirical_def}, FreqCert consistently achieves the highest empirical accuracy across all three backbones under both PGD and AdvPC attacks. The performance gap is especially pronounced under AdvPC, where IF-Defense suffers notable drops (e.g., $57.9\%$ on CurveNet), while FreqCert maintains robust performance (e.g., $73.3\%$). These results indicate that FreqCert generalizes better to strong, shape-aware attacks compared to existing empirical defenses.

\subsubsection{Impact of $m$:}
 As shown in Fig.~\ref{fig:param_cert}(a), the $m{=}32$ curve consistently outperforms $m{=}8$ in the mid-range region \( \epsilon \in [0.06, 0.10] \), where the separation is most pronounced. For \( \epsilon < 0.04 \), all three curves are nearly identical. The $m{=}8$ configuration drops sharply near \( \epsilon \approx 0.10 \), while $m{=}32$ and $m{=}64$ remain valid up to \( \epsilon \approx 0.12 \). Between them, $m{=}64$ shows a slight but marginal advantage. This reflects a trade-off: increasing \( m \) improves voting robustness by raising \( \alpha = \lfloor(m{-}1)/2\rfloor \) and \( q = \lfloor\alpha/\kappa\rfloor + 1 \), but also increases the overlap factor \( \kappa \), limiting further gains.

\subsubsection{Impact of $n$:}
 As shown in Fig.~\ref{fig:param_cert}(b), larger slices (\(n{=}256\)) achieve the highest certified accuracy at small perturbations but degrade rapidly, with the certificate vanishing around \(\epsilon\!\approx\!0.08\). In contrast, smaller slices (\(n{=}32\)) maintain non-zero certified accuracy up to \(\epsilon\!\approx\!0.14\), albeit with significantly lower performance in the low-perturbation regime. This trade-off arises because large slices capture more global structure but are more vulnerable to global geometric distortions, while small slices are more robust to such perturbations but suffer from higher prediction variance due to limited resolution.

\subsubsection{Backbone comparison:} 
 Fig.~\ref{fig:param_cert}(c) shows that DGCNN stays consistently above PointNet across the whole perturbation range (\(\epsilon\le 0.10\)), but the gap is small (typically below 2\%). Both curves drop abruptly and reach zero around \(\epsilon\!\approx\!0.12\). This indicates that FreqCert is largely architecture-agnostic—the certificate is dominated by the spectral subsampling and voting scheme—while the stronger local feature extraction of DGCNN provides a slight advantage.

\section{Conclusion}
In this work, we propose FreqCert, a certification framework based on frequency-domain subsampling for 3D point clouds. A novel spectral slicing algorithm enables closed-form $\ell_2$ robustness certificates against structured perturbations without adversarial training or randomized smoothing. Theoretical analysis proves tightness under minimal assumptions, and experiments on ModelNet40 and ScanObjectNN demonstrate consistent improvements in certified and empirical robustness.

\bibliography{aaai2026}
\clearpage


\appendix
\section{Appendix}
\subsection{Proof of Theorem~\ref{thm:slice}}
\label{app:proof_slice}

We restate the certified slice stability guarantee as follows. Let $P$ be an input point cloud, and suppose the Frobenius norm of the perturbation satisfies
\begin{equation}
    \|\Delta P\|_F < R_{\text{slice}}, \qquad
    R_{\text{slice}} := \frac{g_{\min}\,\Delta\lambda}{4\sqrt{K}}, \label{eq:Rslice2}
\end{equation}
where $g_{\min}$ is the minimum spectral margin across all points,
$\Delta\lambda$ is the Laplacian eigengap at index $K$, and $K$ is the number of retained frequencies.

Then no slice changes its prediction, and the overall classification remains unaffected.

Moreover, the bound is tight: any perturbation with norm greater than $R_{\text{slice}}$ may change the predicted label of at least one slice.

\vspace{0.5em}
We aim to show that small perturbations do not alter slice assignments. The key idea is to ensure the dominant frequency of each point remains unchanged. This requires the perturbation-induced change in spectral energy to be smaller than the gap between the dominant and other frequencies. We first bound this spectral gap.
\paragraph{Energy gap bound via spectral margin}
Let \( \nu_i^\star := \arg\max_{\nu} |u_\nu^\top p_i|^2 \) be the dominant frequency of point \( p_i \), and let \( g_i \) be its distance to the nearest band center:
\[
g_i := \min_b \left| \nu_i^\star - \mu_b \right|, \quad \text{with } \mu_b = \left(b + \tfrac{1}{2} \right)\tfrac{K}{m}.
\]
Define the minimum energy gap between the dominant and any other frequency as
\begin{equation}
\delta_i := \min_{\nu \ne \nu_i^\star} \left( |u_{\nu_i^\star}^\top p_i|^2 - |u_\nu^\top p_i|^2 \right). \label{eq:delta_def}
\end{equation}
Since the spectral energies \( |u_\nu^\top p_i|^2 \) sum to \( \|p_i\|_2^2 \le 1 \), we have \( \max_\nu |u_\nu^\top p_i|^2 \ge 1/K \), and the difference in energy between adjacent frequencies is bounded. Using a discrete derivative approximation and the Courant–Fischer relation, we obtain:
\begin{equation}
\delta_i \ge \frac{g_i\,\Delta\lambda}{4\sqrt{K}}. \label{eq:gap_bound}
\end{equation}

\paragraph{Perturbation-induced spectral energy variation}
For any frequency \( \nu \), the variation in spectral energy caused by perturbation \( \Delta p_i \) is
\begin{equation}
\left|\, |u_\nu^\top(p_i + \Delta p_i)|^2 - |u_\nu^\top p_i|^2 \,\right| \le 2\|\Delta p_i\|_2 + \|\Delta p_i\|_2^2. \label{eq:spectral_variation}
\end{equation}
This follows from the expansion \( (a + b)^2 - a^2 = 2ab + b^2 \) with \( a = u_\nu^\top p_i \), \( b = u_\nu^\top \Delta p_i \), and the Cauchy–Schwarz inequality.

\paragraph{Pointwise slice stability}
Suppose
\begin{equation}
\|\Delta p_i\|_2 < \frac{g_i\,\Delta\lambda}{4\sqrt{K}}. \label{eq:pointwise_margin}
\end{equation}
Then from Eq.~\eqref{eq:spectral_variation},
\begin{equation}
2\|\Delta p_i\|_2 + \|\Delta p_i\|_2^2 < \frac{g_i\,\Delta\lambda}{2\sqrt{K}} \le \delta_i,
\label{eq:energy_margin}
\end{equation}
since \( 2x + x^2 < x/2 \) when \( x < 1/4 \). This implies that the ordering of spectral energies remains unchanged, so the dominant frequency \( \nu_i^\star \) does not shift, and the point \( p_i \) retains its original slice assignment.

\paragraph{Global guarantee}
If the Frobenius norm satisfies \( \|\Delta P\|_F < R_{\text{slice}} \), then for every \( i \),
\begin{equation}
\|\Delta p_i\|_2 \le \|\Delta P\|_F < \frac{g_{\min}\,\Delta\lambda}{4\sqrt{K}} \le \frac{g_i\,\Delta\lambda}{4\sqrt{K}},
\label{eq:pointwise_bound}
\end{equation}
so all points satisfy the pointwise condition in Eq.~\eqref{eq:pointwise_margin}. Thus, no slice assignment changes, and the prediction is stable:
\begin{equation}
h(P + \Delta P) = h(P). \label{eq:stable_prediction}
\end{equation}

\paragraph{Tightness}
Select the point \( p_j \) with \( g_j = g_{\min} \) and construct a perturbation
\begin{equation}
\Delta p_j = (R_{\text{slice}} + \varepsilon)\, \mathbf{v}, \label{eq:tight_vec}
\end{equation}
where \( \mathbf{v} \) is a unit vector chosen to maximally reduce the spectral energy gap, e.g., in the direction \( u_{\tilde{\nu}} - u_{\nu_j^\star} \) with \( |\tilde{\nu} - \nu_j^\star| = g_{\min} \). All other points are fixed, so \( \|\Delta P\|_F = R_{\text{slice}} + \varepsilon \) and
\begin{equation}
\|\Delta p_j\|_2 > \frac{g_j\,\Delta\lambda}{4\sqrt{K}},
\label{eq:flip_threshold}
\end{equation}
violating the pointwise condition. The dominant frequency of \( p_j \) can then change, altering its slice assignment and possibly flipping the final prediction if the voting margin is 1. Therefore, no uniform increase of \( R_{\text{slice}} \) is possible. \qed




.

\subsection{Proof of Theorem~\ref{thm:majority}}
Let \( g_{(1)} \le \cdots \le g_{(N)} \) be the sorted spectral margins of the points. Define
\[
\alpha = \left\lfloor \frac{m - 1}{2} \right\rfloor, \quad p = \left\lfloor \frac{\alpha}{\kappa} \right\rfloor.
\]
Then for
\begin{equation}
    R^\star := \frac{\Delta \lambda\,\sqrt{K}}{8}
    \left( \sum_{i=1}^{p+1} g_{(i)}^2 \right)^{1/2},                                    \label{eq:Rmajority2}
\end{equation}

any perturbation \( \Delta P \) with \( \| \Delta P \|_F < R^\star \) affects at most \( p \) points, and thus no more than \( \kappa p \le \alpha \) slices. The majority vote, and hence the prediction \( h(P) \), remains unchanged.

This radius is tight: any uniform improvement of \( R^\star \) would fail in the worst case without additional assumptions.

\vspace{1em}

We now provide a detailed proof, structured in four parts. We begin by characterizing when an individual point’s slice assignment may change under perturbation. We then upper bound the total number of affected points, followed by an analysis of their cumulative impact on slice-level predictions. Finally, we establish the tightness of the certified radius.

\paragraph{Slice Flips Induced by Point Perturbation}
As shown in Theorem~\ref{thm:slice}, a point \( p_i \) will retain its slice assignment as long as
\begin{equation}
\| \Delta p_i \|_2 < \frac{g_i \, \Delta\lambda}{4\sqrt{K}}.
\label{eq:thm2_point_margin}
\end{equation}
This gives a \emph{per-point margin} for stability. Conversely, if this inequality is violated, the slice assignment of \( p_i \) may change.

Let \( \mathcal{I}_{\text{flip}} \subseteq \{1,\dots,N\} \) denote the set of perturbed points whose slice assignment may change. Then the total number of potentially changed slices is at most
\begin{equation}
|\mathcal{S}_{\text{flip}}| \le \kappa \cdot |\mathcal{I}_{\text{flip}}|.
\label{eq:thm2_slice_count}
\end{equation}

\paragraph{Bounding the Number of Flipped Slices}
Suppose \( \Delta P \) satisfies Eq.~\eqref{eq:Rmajority2}. Then by the definition of Frobenius norm:
\begin{equation}
\sum_{i=1}^{N} \|\Delta p_i\|_2^2 < \left( \frac{\Delta\lambda\,\sqrt{K}}{8} \right)^2 \cdot \sum_{i=1}^{p+1} g_{(i)}^2.
\label{eq:thm2_fro_bound}
\end{equation}
If more than \( p \) points violate their individual margin thresholds, then for at least \( p+1 \) indices \( i \),
\begin{equation}
\| \Delta p_i \|_2^2 \ge \left( \frac{g_i \, \Delta\lambda}{4\sqrt{K}} \right)^2.
\label{eq:thm2_point_violate}
\end{equation}
Summing over those indices:
\begin{equation}
\sum_{i=1}^{N} \|\Delta p_i\|_2^2 \ge \left( \frac{\Delta\lambda\,\sqrt{K}}{4} \right)^2 \cdot \sum_{i=1}^{p+1} g_{(i)}^2,
\label{eq:thm2_contra}
\end{equation}
which contradicts the assumed bound in Eq.~\eqref{eq:thm2_fro_bound}. Therefore, at most \( p \) points can change their slice assignment.

\paragraph{Stability of the Final Prediction}
Even if up to \( p \) points change their slice assignment, the total number of affected slices is at most
\begin{equation}
\kappa p \le \alpha.
\label{eq:thm2_slice_majority}
\end{equation}
Since the total number of slices is \( m \), and a majority requires at least \( \alpha + 1 \) consistent votes, the final prediction is preserved as long as no more than \( \alpha \) slices are perturbed. Thus,
\begin{equation}
h(P + \Delta P) = h(P).
\label{eq:thm2_final_pred}
\end{equation}

\paragraph{Tightness of the Bound}
Choose the \(q=p+1\) points with the smallest margins \(g_{(1)},\dots,g_{(q)}\) and leave all other points unperturbed.
For each selected index \(i\) take the worst-case unit vector \(\boldsymbol{v}_i\) that maximally decreases the dominant–runner-up energy gap, and set
\begin{equation}
\Delta p_i = \Bigl(\tfrac{g_i\,\Delta\lambda}{4\sqrt{K}}\Bigr)\,
             \boldsymbol{v}_i ,\qquad i=1,\dots,q,
\label{eq:thm2_tight_vec}
\end{equation}
with \(\Delta p_j=0\) for \(j\notin\{1,\dots,q\}\).
Assume the \(\boldsymbol{v}_i\) are mutually orthogonal (this can always be arranged in \(\mathbb{R}^3\) by placing the \(q\) points far apart). We have
\begin{equation}
\begin{split}
\|\Delta P\|_F^2
  &= \sum_{i=1}^{q}
     \left(\tfrac{g_{(i)}\,\Delta\lambda}{4\sqrt{K}}\right)^2 \\
  &= \left(\tfrac{\Delta\lambda\,\sqrt{K}}{8}\right)^2
     \sum_{i=1}^{q} g_{(i)}^2
  = (R^\star)^2.
\end{split}
\label{eq:thm2_tight_exact}
\end{equation}

Thus every chosen point is perturbed exactly to its flip threshold and all
\(q\) slice assignments can change.

\textbf{Any radius larger than \(R^\star\) fails.}
Fix an arbitrary \(\varepsilon>0\) and increase each \(\Delta p_i\) in
\eqref{eq:thm2_tight_vec} by the factor \(\bigl(1+\tfrac{\varepsilon}{qR^\star}\bigr)\).
Because the directions are orthogonal,
\[
\|\Delta P\|_F = R^\star + \varepsilon .
\]
The same \(q\) points still flip, so at least \(\kappa q\) slices change.  Since
\(q=p+1\) and \(p=\lfloor \alpha/\kappa \rfloor\), we have
\(\kappa q \ge \alpha + 1\); the ensemble therefore loses its majority and the
prediction can differ.  Consequently, any uniform increase of the certified
radius beyond \(R^\star\) admits a counter-example, proving tightness. \qed

\newpage
\subsection{Results on ScanObjectNN}

\begin{figure}[t]
  \centering
  \begin{subfigure}[b]{0.48\linewidth}
    \includegraphics[width=\linewidth]{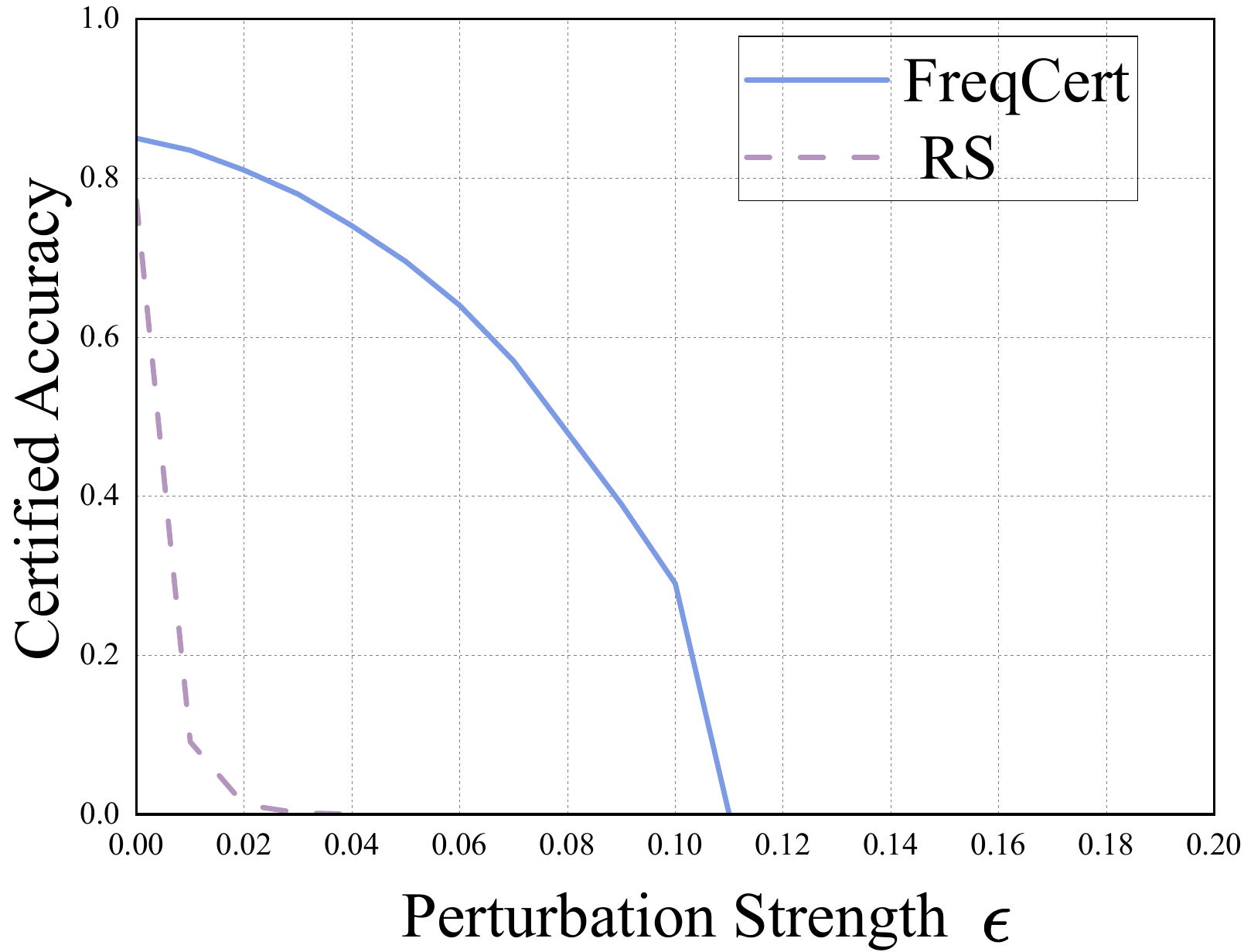}
    \caption{}
    \label{fig:cert_scanobject}
  \end{subfigure}
  \begin{subfigure}[b]{0.48\linewidth}
    \includegraphics[width=\linewidth]{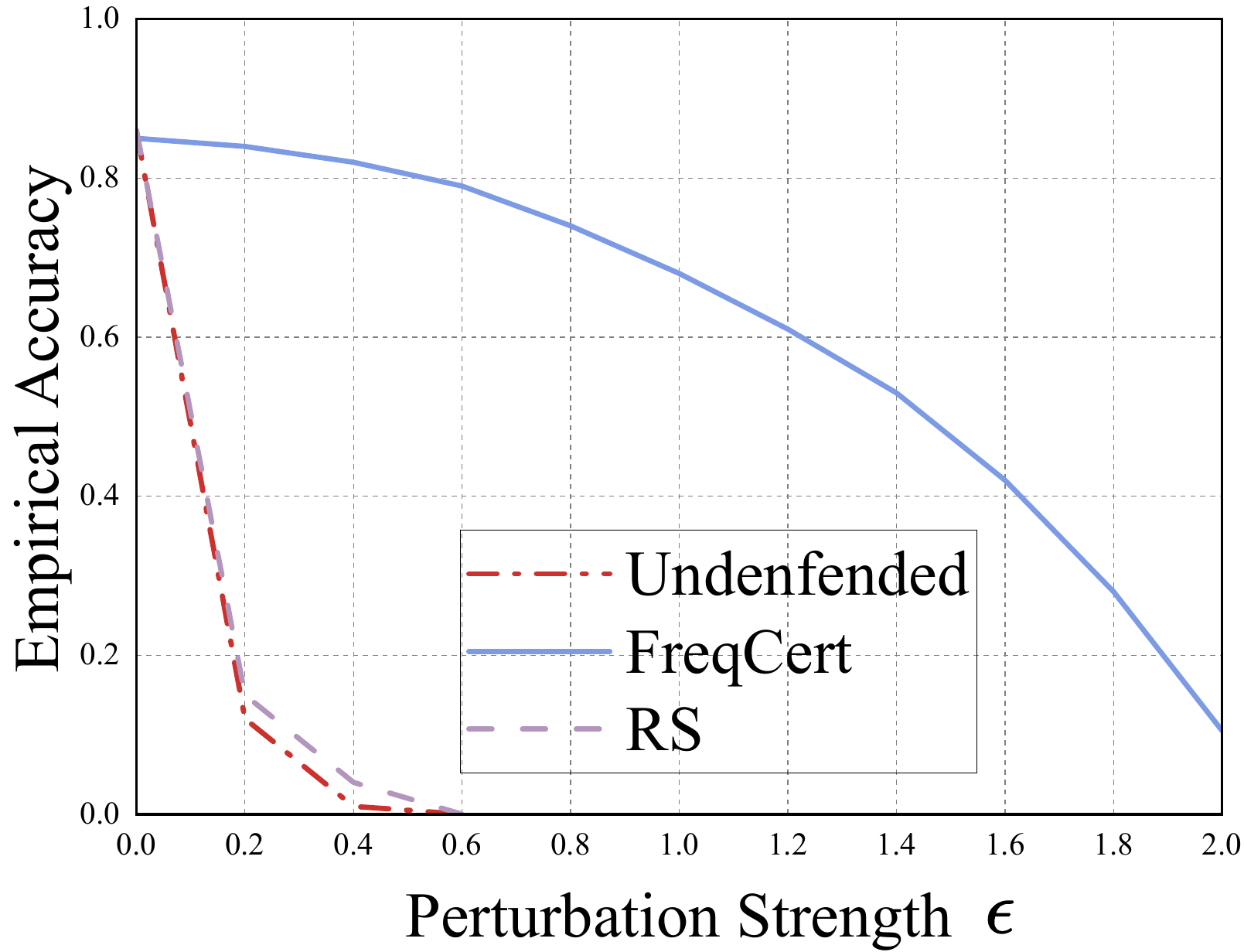}
    \caption{}
    \label{fig:emp_scanobject}
  \end{subfigure}
   \caption{
    Robustness comparison on the Robustness comparison on the ScanObjectNN-OBJ\_Only dataset. 
    (a) Certified accuracy under different $\ell_2$ perturbation strengths. 
    (b) Empirical accuracy against PGD attacks with increasing $\ell_2$ strength. 
  }
  \label{fig:scanobjectnn_results}
\end{figure}

The results shown in Fig.~\ref{fig:scanobjectnn_results} demonstrate the robustness performance of different methods on the ScanObjectNN dataset. FreqCert consistently outperforms Randomized Smoothing in both certified and empirical accuracy across all perturbation levels on ScanObjectNN. The advantage is particularly pronounced in the certified setting, where FreqCert maintains robustness well beyond the breakdown point of RS. These results highlight the effectiveness of frequency-guided slicing in capturing structural features under complex perturbations.

Compared to results on ModelNet40, the robustness curves on ScanObjectNN exhibit a steeper decline. This is primarily due to the presence of real-world artifacts such as noise, occlusion, and misalignment, which increase spectral instability and challenge slice consistency. Despite this, FreqCert remains substantially more robust than baseline methods, demonstrating its adaptability across diverse data conditions.
\end{document}